\documentclass[11pt]{article}

\usepackage[utf8]{inputenc}
\usepackage[T1]{fontenc}
\usepackage{lmodern}
\usepackage[margin=1in]{geometry}
\usepackage{microtype}
\usepackage{amsmath,amssymb,amsthm,mathtools}
\usepackage{booktabs,array,tabularx}
\usepackage{xcolor}
\usepackage{listings}
\usepackage{tikz}
\usetikzlibrary{positioning,arrows.meta}
\usepackage[numbers,sort&compress]{natbib}
\usepackage[colorlinks=true,linkcolor=blue!50!black,citecolor=blue!50!black,urlcolor=blue!50!black]{hyperref}

\hypersetup{
  pdftitle={Masked Attention Pooling Is Invariant to Padding and Permutation: A Formal Proof in Lean},
  pdfauthor={Yusuf Khaled Sadek, Enes Yilmaz},
  pdfkeywords={multiple instance learning, attention pooling, padding, permutation invariance, Lean, formal verification, JAX}
}

\newtheorem{theorem}{Theorem}
\newtheorem{proposition}[theorem]{Proposition}
\newtheorem{corollary}[theorem]{Corollary}

\theoremstyle{definition}
\newtheorem{definition}[theorem]{Definition}
\theoremstyle{remark}

\newcommand{\R}{\mathbb{R}}
\newcommand{\Pool}{\operatorname{Pool}}
\newcommand{\lean}[1]{\texttt{\detokenize{#1}}}
\definecolor{codebg}{gray}{0.96}
\definecolor{kw}{rgb}{0.10,0.25,0.55}
\definecolor{cm}{gray}{0.40}

\lstdefinelanguage{Lean}{
  morekeywords={theorem,def,noncomputable,fun,let,if,then,else,by,import,
    namespace,end,open,scoped,with,have,intro,cases,funext,simp,rfl,rw,exact,
    unfold,apply,change,only,match},
  sensitive=true,
  morecomment=[l]{--},
  morecomment=[s]{/-}{-/},
  morestring=[b]",
}
\title{Masked Attention Pooling Is Invariant to\\
Padding and Permutation: A Formal Proof in Lean}
\author{Yusuf Khaled Sadek\qquad Enes Yilmaz\\[2pt]
\small The Ohio State University, Columbus, OH, USA\\[2pt]
\small\href{mailto:yusufkhsadek@gmail.com}{\texttt{yusufkhsadek@gmail.com}}}
\date{October 2026}

\begin{document}
\maketitle

\begin{abstract}
Attention-based multiple instance learning (MIL) pools a variable-size bag of
instance embeddings into one vector. In batched training, bags are padded to a
common length, and the pooled output is expected not to depend on how much
padding was added or on the order in which instances are stored. This
expectation holds only if padded rows are removed from both the numerator and
the softmax normalizer. Common implementation idioms meet it only under side
conditions. Zeroing padded features alone does not work. Masking logits with
$-\infty$ fails on empty bags and on nonfinite padding. Masking with a large
negative constant is exact only after floating-point underflow. We define masked attention pooling with a
guarded normalizer and prove in Lean~4 with mathlib that it is invariant under
any bijective reindexing of rows and under appending any finite number of
masked rows with arbitrary real features and logits. We also prove that
values at masked positions are irrelevant and that an all-masked bag maps to
zero. The theorems hold for arbitrary finite index types and feature
dimensions. Using the \texttt{jax-lean} certificate pipeline, we connect these
theorems to the Jaxpr traced from the JAX kernel itself. Lean checks
translation certificates for two concrete shapes, identifies the generated
functions with the generic specification, and transfers both invariances to
the imported Jaxpr's real-arithmetic semantics. All audited theorems depend
only on Lean's three standard axioms. The results concern ideal real
arithmetic. Floating-point execution, XLA, and the JAX tracer and importer
remain outside the proof. The development is short and publicly available.
\end{abstract}

\section{Introduction}

In multiple instance learning~\cite{dietterich1997}, a label is attached to a
\emph{bag} of instances rather than to each instance. Attention-based MIL
pooling~\cite{ilse2018} computes a scalar score for every instance, normalizes
the scores with a softmax over the bag, and returns the weighted average of the
instance embeddings. This construction underlies much of weakly supervised
computational pathology, where a whole-slide image is a bag of thousands of
tile embeddings and only the slide carries a label~\cite{campanella2019,lu2021}.
Ilse et al.\ introduce the operator as a permutation-invariant aggregation
over the bag~\cite{ilse2018}, in the same spirit as other set
architectures~\cite{zaheer2017,lee2019}.

Bags differ in size, so batched implementations store them in a dense array
of $n$ rows together with a validity mask or a length. The model is then
expected to satisfy two properties:
\begin{enumerate}
\item \textbf{Padding invariance.} Appending masked rows, with whatever
contents, does not change the pooled output.
\item \textbf{Permutation invariance.} Reordering the rows, with each row's
features, score, and mask moving together, does not change the output.
\end{enumerate}
These properties are easy to state and easy to break. Zeroing the features of
padded rows is not enough: a padded row with logit $0$ still contributes
$e^0=1$ to the softmax normalizer. Two real instances with values $2$ and $6$
and equal logits should pool to $4$. With $p$ zero-feature padded rows left
inside the softmax, they pool to $8/(2+p)$, so the output depends on the
batch's padding length. Masking logits with $-\infty$ fixes the normalizer
when every bag has a valid row and all padding is finite. It yields
\texttt{NaN} when a bag is empty or a padded feature is nonfinite. Masking with a
large negative constant gives exact zeros only through floating-point
underflow, and an empty bag then returns the mean of its padding.
Table~\ref{tab:idioms} summarizes these behaviours.

This note gives a precise definition of masked attention pooling with a
guarded normalizer and machine-checked proofs of its invariances. The
contributions are:
\begin{itemize}
\item Size-generic Lean~4 theorems: invariance under any bijection of the row
index type, invariance under appending any finite set of masked rows with
arbitrary real features and logits, irrelevance of masked values, and a zero
output for all-masked bags (Section~\ref{sec:proofs}).
\item A corollary covering any shared, instance-local scoring function, such as
a linear layer, an MLP, or a gated attention scorer.
\item A connection from the theorems to the code that runs. The JAX kernel is traced to a Jaxpr,
translated and independently imported by \texttt{jax-lean}~\cite{jaxlean},
and linked to the generic specification by checked certificates
(Section~\ref{sec:pipeline}).
\item A precise statement of the trusted base and of what the proof does not
cover (Section~\ref{sec:limits}).
\end{itemize}
The mathematics is elementary: a bijection reindexes a finite sum, and terms
with mass zero vanish. The point is to make the specification exact, including
the empty-bag convention and the requirement that masked rows lose their mass,
and to check that a specific JAX kernel meets it.

\section{Background}

\paragraph{Attention MIL pooling.}
Given instance embeddings $x_1,\dots,x_n\in\R^d$, attention pooling computes
logits $s_i=f(x_i)$ with a shared scorer $f$, weights
$a_i=e^{s_i}/\sum_k e^{s_k}$, and the bag embedding $\sum_i a_i x_i$. Ilse et
al.\ use $f(x)=w^\top\tanh(Vx)$ and a gated variant~\cite{ilse2018}. Our
results do not depend on the form of $f$, only on its being applied to each
instance separately.

\paragraph{JAX and Jaxprs.}
JAX~\cite{jax2018} traces a Python function at fixed input shapes and dtypes
into a Jaxpr, a typed first-order intermediate representation of primitive
operations. This trace is what JAX's transformations and compiler consume.

\paragraph{Lean, mathlib, and \texttt{jax-lean}.}
Lean~4~\cite{lean4} is an interactive theorem prover with a small trusted
kernel, and mathlib~\cite{mathlib2020} supplies the real numbers, the
exponential function, and finite sums over \lean{Fintype}s. The
\texttt{jax-lean} library~\cite{jaxlean} translates a Jaxpr into two Lean
artifacts: a readable tensor function, and a separately imported, typed
syntax tree of the Jaxpr together with an evaluator that gives it ideal
real-arithmetic semantics. For each function it emits a \emph{translation
certificate}, a Lean theorem stating that evaluating the imported Jaxpr agrees
with the readable function on every input of that shape. Lean checks this
theorem, so the readable function does not have to be trusted. The importer,
which decides what the original Jaxpr means in the Lean syntax, does.

\section{Masked attention pooling}\label{sec:def}

Let $I$ be a finite type of row indices and $d\in\mathbb{N}$. A padded bag is
given by features $x:I\to\R^d$, logits $s:I\to\R$, and a mask
$m:I\to\{\mathrm{true},\mathrm{false}\}$. The mask may be arbitrary, including
invalid rows interleaved with valid ones; a length-$\ell$ prefix mask is the
special case $m_i=[\,i<\ell\,]$.

\begin{definition}[Masked attention pooling]\label{def:pool}
The \emph{mass} and \emph{masked feature} of row $i$ are
\[
w_i=\begin{cases}e^{s_i}, & m_i,\\ 0, & \neg m_i,\end{cases}
\qquad
v_i=\begin{cases}x_i, & m_i,\\ 0, & \neg m_i.\end{cases}
\]
With $Z=\sum_{i\in I}w_i$ and $D=Z$ if $Z>0$ and $D=1$ otherwise,
\[
\Pool(x,s,m)=\frac{1}{D}\sum_{i\in I}w_i\,v_i\;\in\R^d .
\]
\end{definition}

If some row is valid then $Z>0$, and the coefficients $w_i/Z$ form a
probability distribution supported on the valid rows, so this is the usual
attention average over the real instances. If no row is valid, or $I$ is
empty, the numerator is the zero vector and the output is $0$. (These two
remarks follow directly from $e^{s}>0$; only the second is formalized, as
Proposition~\ref{prop:allmasked}.) The guard on $D$ fixes the value on empty
bags and avoids $0/0$. The Lean definition is a direct transcription:

\begin{lstlisting}[language=Lean]
noncomputable def mass (logits : ι → ℝ) (mask : ι → Bool) (i : ι) : ℝ :=
  if mask i then Real.exp (logits i) else 0

noncomputable def pool [Fintype ι] (values : ι → Fin d → ℝ)
    (logits : ι → ℝ) (mask : ι → Bool) : Fin d → ℝ :=
  let total := ∑ i, mass logits mask i
  fun j => (∑ i, mass logits mask i * (if mask i then values i j else 0)) /
    (if total > 0 then total else 1)
\end{lstlisting}

The certified JAX kernel follows the same structure. The \texttt{where} on the
logits keeps a masked $+\infty$ out of the exponential, and the \texttt{where}
on the features keeps $0\cdot\texttt{NaN}$ out of the product:

\begin{lstlisting}[language=Python]
def masked_pool(values, logits, mask):
    safe_logits = jnp.where(mask, logits, 0.0)
    safe_values = jnp.where(mask[:, None], values, 0.0)
    weights = jnp.where(mask, jnp.exp(safe_logits), 0.0)
    total = jnp.sum(weights)
    denominator = jnp.where(total > 0.0, total, 1.0)
    return jnp.sum(weights[:, None] * safe_values, axis=0) / denominator
\end{lstlisting}

\begin{table}[t]
\centering
\small
\caption{Behaviour of common masking idioms in JAX (float32). The bag is the
three-instance example of Section~\ref{sec:validation}, whose correct output is
$(2.0572,\,1.3532)$. ``Zero pad'': two appended rows with zero features and zero
logits. ``NaN pad'': the same rows with \texttt{NaN} features. ``Empty'': two
rows, both masked, with features $(1,2)$ and $(3,4)$. The last column asks
whether the idiom is exactly padding-invariant in real arithmetic.}
\label{tab:idioms}
\begin{tabularx}{\linewidth}{@{}>{\raggedright\arraybackslash}Xcccc@{}}
\toprule
Idiom & Zero pad & NaN pad & Empty & Exact in $\R$\\
\midrule
Zero padded features only & $(1.3563,0.8921)$ & $(1.3563,0.8921)$ & $(0,0)$ & no\\
Logits $\to-\infty$, features unmasked & correct & \texttt{NaN} & \texttt{NaN} & n/a\\
Logits $\to-10^{9}$, features unmasked & correct & \texttt{NaN} & $(2,3)$ & no\\
Softmax of $-\infty$-masked logits, features zeroed & correct & correct & \texttt{NaN} & n/a\\
Definition~\ref{def:pool} (\texttt{masked\_pool}) & correct & correct & $(0,0)$ & yes\\
\bottomrule
\end{tabularx}
\end{table}

Table~\ref{tab:idioms} compares this kernel with four common alternatives,
executed in JAX. ``n/a'' marks idioms that use $-\infty$, which has no
counterpart in $\R$. The $-10^9$ idiom is correct in float32 because
$e^{-10^9}$ underflows to exactly zero. In real arithmetic the padded rows
keep a small positive mass, so exact invariance cannot be proved for it.

\section{Theorems}\label{sec:proofs}

All statements below are proved in \lean{MaskedAttention/Basic.lean} for
arbitrary finite types and arbitrary $d$. No bag size, feature value, or
shape is enumerated. We give the informal argument and the Lean statement for
each.

\begin{theorem}[Reindexing]\label{thm:reindex}
Let $K$ and $I$ be finite and $e:K\simeq I$ a bijection. Then
$\Pool(x\circ e,\;s\circ e,\;m\circ e)=\Pool(x,s,m)$.
\end{theorem}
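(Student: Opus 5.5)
The proof reduces to one fact: a finite sum over $I$ may be reindexed along the bijection $e:K\simeq I$. In mathlib this is \lean{Equiv.sum_comp}, stating $\sum_{k\in K} g(e\,k)=\sum_{i\in I} g(i)$ for any $g:I\to\R$. Everything else is unfolding Definition~\ref{def:pool} and observing that each summand depends on its row only through the composed data.

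First I would establish a pointwise lemma for the mass: for every $k\in K$, the mass of row $k$ under logits $s\circ e$ and mask $m\circ e$ equals the mass of row $e\,k$ under $s$ and $m$. This holds definitionally, since both sides are \lean{if m (e k) then Real.exp (s (e k)) else 0}, so \lean{rfl} or a one-line \lean{simp [mass]} closes it. The masked feature behaves the same way: for each coordinate $j$, the value at $(k,j)$ under $x\circ e$ and $m\circ e$ is the value at $(e\,k,j)$ under $x$ and $m$.

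Next I would rewrite the total mass. Applying \lean{Equiv.sum_comp} with $g(i)=w_i$ shows that the total $Z_K$ for the reindexed bag equals $Z_I$. Hence the guarded denominators agree, since $D$ is a function of $Z$ alone. Then I would rewrite the numerator: after \lean{funext j}, I apply \lean{Equiv.sum_comp} to $g(i)=w_i\cdot v_{i,j}$, and the numerators agree coordinatewise. Dividing equal numerators by equal denominators finishes the proof. In Lean the whole argument should be \lean{funext j; simp only [pool]}, followed by rewriting both sums with \lean{Equiv.sum_comp e} (or \lean{Fintype.sum_equiv}).

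I expect the main obstacle to be the \lean{let} binding for the total inside \lean{pool}, not the mathematics. The rewrite must see the sum $\sum_k \mathrm{mass}\,(s\circ e)\,(m\circ e)\,k$ syntactically in the form $\sum_k g(e\,k)$. If \lean{simp only [pool]} does not zeta-reduce the \lean{let}, I would first unfold with \lean{show} or \lean{dsimp only [pool]}. I would then state the two sum identities as separate \lean{have} lemmas, proved by \lean{exact Equiv.sum_comp e _}, so that unification of the function $g$ happens in a controlled place, and close the goal with \lean{rw} on those lemmas.
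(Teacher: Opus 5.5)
Your proposal is correct and matches the paper's proof. Both reduce the statement to reindexing two sums along $e$ with \lean{Equiv.sum_comp}: the normalizer sum and the coordinatewise numerator sum, after which equal numerators over equal guarded denominators finish the proof. The paper's Lean proof does \lean{funext j}, unfolds \lean{pool} and \lean{mass}, simplifies with \lean{Function.comp_apply}, and rewrites both sums with \lean{e.sum_comp} applied to explicitly supplied summand functions, which is the controlled unification you anticipated needing.
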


\begin{proof}
Under the reindexing, $w'_k=w_{e(k)}$ and $v'_k=v_{e(k)}$. For any $g:I\to\R$,
$\sum_{k\in K}g(e(k))=\sum_{i\in I}g(i)$ because $e$ is a bijection. Applying
this to $g=w$ gives $Z'=Z$, so $D'=D$. Applying it to $g_i=w_i v_{i,j}$ for each
coordinate $j$ gives equal numerators.
\end{proof}

\begin{lstlisting}[language=Lean]
theorem pool_reindex [Fintype ι] [Fintype κ] (e : κ ≃ ι)
    (values : ι → Fin d → ℝ) (logits : ι → ℝ) (mask : ι → Bool) :
    pool (values ∘ e) (logits ∘ e) (mask ∘ e) = pool values logits mask := by
  funext j
  unfold pool mass
  simp only [Function.comp_apply]
  rw [e.sum_comp (fun i => if mask i then Real.exp (logits i) else 0)]
  rw [e.sum_comp (fun i => (if mask i then Real.exp (logits i) else 0) *
    (if mask i then values i j else 0))]
\end{lstlisting}

Taking $K=I$ gives permutation invariance. The theorem is stated for distinct
$K$ and $I$ because the concrete padding result in Section~\ref{sec:pipeline}
needs to identify $\mathrm{Fin}\,5$ with $\mathrm{Fin}\,3\oplus\mathrm{Fin}\,2$.

\begin{theorem}[Padding]\label{thm:padding}
Let $P$ be any finite type, and let $p:P\to\R^d$ and $t:P\to\R$ be arbitrary.
Write $x\sqcup p$ for the function on $I\oplus P$ that agrees with $x$ on $I$
and with $p$ on $P$, and likewise for the other arguments. Then
\[
\Pool\bigl(x\sqcup p,\; s\sqcup t,\; m\sqcup\mathrm{false}\bigr)=\Pool(x,s,m).
\]
\end{theorem}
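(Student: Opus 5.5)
The plan is to show that the two sums entering Definition~\ref{def:pool}, namely the total mass $Z$ and each coordinate of the numerator, are unchanged by the padding. The output is a function of those two quantities alone, so equality of $D$ and of the numerators gives equality of the pooled vectors coordinatewise. No appeal to Theorem~\ref{thm:reindex} is needed, since the index type $I\oplus P$ is handled directly by splitting sums over a disjoint union.

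First, for any $g:I\oplus P\to\R$, a sum over $I\oplus P$ splits as $\sum_{i\in I}g(\mathrm{inl}\,i)+\sum_{q\in P}g(\mathrm{inr}\,q)$. In mathlib this is \lean{Fintype.sum_sum_type}. Apply this to the mass $w'$ of the padded bag. On $\mathrm{inl}\,i$ the padded mask, logit and feature reduce definitionally (via the \lean{Sum.elim} equations) to $m_i$, $s_i$ and $x_i$, so $w'_{\mathrm{inl}\,i}=w_i$. On $\mathrm{inr}\,q$ the mask is $\mathrm{false}$, so $w'_{\mathrm{inr}\,q}=0$ regardless of $t_q$. Hence the $P$-sum is $\sum_q 0=0$, and $Z'=Z$, which gives $D'=D$ since the guard depends only on $Z$.

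Second, fix a coordinate $j$ and treat the numerator $\sum w'_k v'_{k,j}$ the same way. On the left summand the terms agree with $w_i v_{i,j}$. On the right summand the factor $w'_{\mathrm{inr}\,q}$ is $0$, and $v'_{\mathrm{inr}\,q,j}$ is also $0$ because the mask is false. The term therefore vanishes whatever the arbitrary values $p_q$ and $t_q$ are. This independence from $p$ and $t$ is the whole content of the claim, and it holds because both conditionals are guarded by the mask rather than by the data. The numerators agree, so dividing by the common $D$ finishes the proof.

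In Lean I expect the following script: \lean{funext j}, \lean{unfold pool mass}, \lean{simp [Fintype.sum_sum_type, Sum.elim_inl, Sum.elim_inr]}. The only likely obstacle is bookkeeping. The padded arguments must be written as \lean{Sum.elim}, with the mask as \lean{Sum.elim mask (fun _ => false)}, so that \lean{simp} can reduce the \lean{if} on $\mathrm{inr}$ and discharge the $P$-sums with \lean{Finset.sum_const_zero} and \lean{add_zero}. If \lean{simp} instead normalizes the \lean{let}-bound total or the guard \lean{if total > 0} inconsistently between the two sides, I would first prove the two sum equalities as separate \lean{have} lemmas and then \lean{rw} them into the goal.
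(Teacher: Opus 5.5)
Your proposal is correct and follows the paper's proof. The paper also splits both sums over $I\oplus P$ with \lean{Fintype.sum_sum_type`}, notes that every padded row has zero mass and zero masked feature whatever $p$ and $t$ are, and concludes that $Z$ (hence $D$) and each numerator coordinate are unchanged; its Lean script is essentially yours, \lean{funext j} then \lean{simp [pool, mass, Fintype.sum_sum_type]}, closed by a final \lean{rfl}.
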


\begin{proof}
A sum over $I\oplus P$ splits into a sum over $I$ plus a sum over $P$. Every
padded row is masked, so its mass and masked feature are $0$ regardless of
$p$ and $t$. Both the normalizer and the numerator therefore equal their
unpadded values.
\end{proof}

\begin{lstlisting}[language=Lean]
theorem pool_padding [Fintype ι] [Fintype κ]
    (values : ι → Fin d → ℝ) (logits : ι → ℝ) (mask : ι → Bool)
    (padding : κ → Fin d → ℝ) (paddingLogits : κ → ℝ) :
    pool (Sum.elim values padding) (Sum.elim logits paddingLogits)
      (Sum.elim mask (fun _ => false)) = pool values logits mask := by
  funext j
  simp [pool, mass, Fintype.sum_sum_type]
  rfl
\end{lstlisting}

The padded features and logits are arbitrary reals; they need not be zero.
Rows already masked inside $I$ are allowed, and $P$ may be empty. Combining
Theorems~\ref{thm:reindex} and~\ref{thm:padding}, any bijection of the padded
index set $I\oplus P$ that moves the mask with the rows also leaves the output
unchanged. Padded rows may be interleaved anywhere.

\begin{proposition}[Masked values are irrelevant]\label{prop:congr}
If $x_i=x'_i$ and $s_i=s'_i$ for every $i$ with $m_i$, then
$\Pool(x,s,m)=\Pool(x',s',m)$.
\end{proposition}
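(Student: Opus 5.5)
The plan is to show that every summand appearing in $\Pool$ is unchanged termwise, after which both sums, and hence the guarded normalizer and the quotient, agree. There is no reindexing or splitting of the index set, so neither Theorem~\ref{thm:reindex} nor Theorem~\ref{thm:padding} is needed. The proposition is a congruence statement about the two finite sums in Definition~\ref{def:pool}.

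The first step is to establish, for every $i\in I$, that $w_i=w'_i$, where $w'$ is the mass built from $s'$ and the same mask $m$. I will split on $m_i$. If $m_i$ holds, then $w_i=e^{s_i}=e^{s'_i}=w'_i$ by the hypothesis $s_i=s'_i$. If $m_i$ fails, both masses are $0$ by definition, with no hypothesis used. Summing over $I$ gives $Z=Z'$, and therefore $D=D'$, since $D$ is defined from $Z$ alone through the same guard.

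The second step, for each coordinate $j$, is to prove $w_i v_{i,j}=w'_i v'_{i,j}$ by the same case split. When $m_i$ holds, it follows from $w_i=w'_i$ and $x_{i,j}=x'_{i,j}$. When $m_i$ fails, both products are $0\cdot 0$. Summing gives equal numerators, and with $D=D'$ the coordinates of the two outputs coincide; extensionality in $j$ then finishes the proof.

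In Lean I would state the hypotheses as $\forall i,\ m_i=\mathrm{true}\to x_i=x'_i$ and likewise for the logits. After \lean{funext j} and \lean{unfold pool mass}, I would rewrite both sums with \lean{Finset.sum_congr rfl} and close each summand goal by \lean{by_cases h : mask i} followed by \lean{simp [h, hx i h, hs i h]}. The main obstacle is bookkeeping rather than mathematics: the normalizer is hidden under a \lean{let}, which must be unfolded or \lean{simp only}'d away before congruence applies. If the terms still resist, I would first prove a helper lemma \lean{mass logits mask = mass logits' mask} by \lean{funext} and rewrite with it everywhere, after which only the numerator congruence remains.
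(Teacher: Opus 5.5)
Your proposal is correct and takes essentially the same approach as the paper, which proves \lean{pool_masked_congr} by case analysis on the mask. As in your argument, each mass and each masked summand agree, so the guarded normalizer and the numerator coincide.
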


\begin{proposition}[All-masked bags]\label{prop:allmasked}
$\Pool(x,s,\mathrm{false})=0$ for all $x$ and $s$, including when $I$ is empty.
\end{proposition}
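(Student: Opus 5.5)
The statement to prove is Proposition~\ref{prop:allmasked}. The plan is to unfold Definition~\ref{def:pool} with the constant mask $m\equiv\mathrm{false}$ and show that the numerator is the zero vector. Once that is done, the value of the denominator no longer matters.

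First, for every $i\in I$ the mask $m_i$ is false. So by definition $w_i=0$ and $v_i=0$, independently of $x_i$ and $s_i$. In Lean this is the rewrite of \lean{if false then _ else 0} to \lean{0}, which \lean{simp} performs after unfolding \lean{pool} and \lean{mass}.

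Second, consequently $Z=\sum_i 0=0$. Since $Z>0$ fails, $D=1$. Fix a coordinate $j$. The numerator $\sum_i w_i v_{i,j}$ is $\sum_i 0\cdot 0=0$, so the $j$-th output is $0/1=0$.

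If $I$ is empty, both sums are empty and equal $0$ directly, so the same computation applies. No case split on $I$ is needed, because \lean{Finset.sum_const_zero} handles both situations uniformly.

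In Lean I would write \lean{funext j}, then \lean{simp [pool, mass]}. I expect \lean{simp} to reduce the summands to \lean{0} and the sum to \lean{0}, and to close the goal with \lean{zero_div}.

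The only potential obstacle is that \lean{simp} might leave the guard \lean{if 0 > 0 then 0 else 1} unreduced. This is harmless: the numerator is already $0$, and \lean{zero_div} closes the goal whatever the denominator is. If needed, I would add \lean{lt_irrefl} to the simp set.
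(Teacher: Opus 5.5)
Your proposal is correct and matches the paper, which proves \lean{pool_all_masked} by the same case analysis on the constantly false mask. Every mass and masked feature becomes $0$, so the numerator is the zero vector and the output is $0$ whatever the guarded denominator is, and the empty-$I$ case needs no separate treatment since the empty sum is also $0$.
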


Both are proved by case analysis on the mask (\lean{pool_masked_congr},
\lean{pool_all_masked}).

\begin{corollary}[Instance-local scorers]\label{cor:scorer}
Let $f:\R^d\to\R$ be any function and $e:I\simeq I$ a permutation. Then
\[
\Pool\bigl(x\circ e,\;f\circ x\circ e,\;m\circ e\bigr)=\Pool(x,\;f\circ x,\;m).
\]
\end{corollary}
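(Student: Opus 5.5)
The corollary is a direct instance of Theorem~\ref{thm:reindex} with $K=I$. The only step is to recognize the permuted logits as a precomposition with $e$. Set $s:=f\circ x$. Function composition is associative, so $f\circ x\circ e=(f\circ x)\circ e=s\circ e$. The left-hand side is therefore literally $\Pool(x\circ e,\,s\circ e,\,m\circ e)$. Theorem~\ref{thm:reindex}, applied with the bijection $e:I\simeq I$ and the arguments $x$, $s$, $m$, equates this with $\Pool(x,s,m)=\Pool(x,f\circ x,m)$, which is the right-hand side.

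The conceptual content lies in the observation the corollary encodes. Because $f$ is applied to each instance separately, the logit attached to row $k$ after permutation is $f(x_{e(k)})$. This is exactly the logit row $e(k)$ had before, so scores travel with their rows. No property of $f$ beyond being a function $\R^d\to\R$ is used. Continuity, linearity, or any particular MLP or gated form are irrelevant, so the result covers all the scorers listed in the introduction.

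In Lean I would state the corollary with \lean{logits := fun i => f (values i)} and close it with \lean{pool_reindex e values (fun i => f (values i)) mask}. If the elaborator does not see the composed logits as \lean{(fun i => f (values i)) ∘ e} definitionally, a preliminary \lean{show} or \lean{simp only [Function.comp_def]} should align the forms.

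The only foreseeable obstacle is this syntactic matching of the composition. Mathematically it is trivial, and in Lean it is resolved by unfolding \lean{Function.comp}. A second mild wrinkle is that values in the formalization are \lean{Fin d → ℝ} while $f$ takes $\R^d$. Choosing $f:(\mathrm{Fin}\,d\to\R)\to\R$ sidesteps this.
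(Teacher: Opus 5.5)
Your proposal is correct and matches the paper's argument: the corollary is Theorem~\ref{thm:reindex} with $K=I$ and $s=f\circ x$, since the logit $f(x_{e(k)})$ travels with its row. This is how \lean{attention_permutation} is obtained in the paper. Your side remarks are sound implementation details and do not change the argument: aligning the composed logits in Lean, and typing $f$ on $\mathrm{Fin}\,d\to\R$.
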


In Lean this is \lean{attention_permutation}. The logit $f(x_{e(i)})$
travels with its instance, so it is Theorem~\ref{thm:reindex} with $s=f\circ x$.
No assumption is made about $f$: it can be linear, an MLP, the
$\tanh$ scorer of~\cite{ilse2018}, or its gated variant. The corollary does
not cover scorers that use position or other instances. A positional encoding
or a self-attention encoder before pooling needs its own equivariance
argument.

\section{From the JAX kernel to the theorems}\label{sec:pipeline}

The theorems of Section~\ref{sec:proofs} are about a Lean definition. To
connect them with the kernel above, we trace \texttt{masked\_pool} with JAX at
shapes $[3,2]$ and $[5,2]$ (features $[n,2]$, logits and mask $[n]$) and pass
each Jaxpr to \texttt{jax-lean}'s \lean{certify_module}. The generated files
are reproducible: a check script regenerates them and requires byte equality.
Figure~\ref{fig:pipeline} shows the chain of results.

\begin{figure}[t]
\centering
\begin{tikzpicture}[
  node distance=16mm and 16mm,
  box/.style={draw, rounded corners=2pt, align=center, font=\small,
    minimum height=10mm, text width=30mm, fill=white},
  trusted/.style={box, fill=gray!12, dashed},
  arr/.style={-{Stealth[length=2mm]}, thick},
  lab/.style={font=\scriptsize, align=center, fill=white, inner sep=1.5pt}]
\node[trusted] (py) {JAX kernel\\\texttt{masked\_pool}};
\node[trusted, right=of py] (jaxpr) {Jaxpr at\\$[3,2]$ and $[5,2]$};
\node[trusted, right=of jaxpr] (ir) {Imported IR\\\lean{pool_ir}};
\node[box, below=of ir] (fn) {Readable function\\\lean{Generated3.pool}};
\node[box, left=of fn] (spec) {Generic \lean{pool}\\(Definition~\ref{def:pool})};
\node[box, left=of spec] (thm) {Invariance theorems\\\ref{thm:reindex} and \ref{thm:padding}};
\draw[arr] (py) -- node[lab, above=1pt] {trace} (jaxpr);
\draw[arr] (jaxpr) -- node[lab, above=1pt] {import} (ir);
\draw[arr, {Stealth[length=2mm]}-{Stealth[length=2mm]}] (ir) -- node[lab, right=2pt] {translation\\certificate} (fn);
\draw[arr, {Stealth[length=2mm]}-{Stealth[length=2mm]}] (fn) -- node[lab, above=1pt] {\lean{_spec}} (spec);
\draw[arr] (thm) -- node[lab, above=1pt] {apply} (spec);
\end{tikzpicture}
\caption{Chain of results. Shaded, dashed boxes and the arrows between them
are trusted: JAX tracing and the \texttt{jax-lean} importer that encodes the
Jaxpr as Lean syntax. The vertical and lower arrows are theorems checked by
Lean. The \lean{certified_*} theorems compose them to state both invariances
directly about the imported IR.}
\label{fig:pipeline}
\end{figure}

\paragraph{Translation certificates.}
For each shape, \texttt{jax-lean} emits a readable function such as
\lean{Generated3.pool} and an imported program \lean{Generated3.pool_ir},
plus helper functions for each \texttt{jnp.where}. The theorem
\lean{pool_translation_correct} states that evaluating \lean{pool_ir} with
\lean{Jaxpr.Program.eval} gives the readable
function on every real tensor input of that shape. It is proved by
\texttt{jax-lean}'s certificate tactic from local evaluator lemmas.

\paragraph{Specification lemmas.}
\lean{generated3_spec} and \lean{generated5_spec} prove that each readable
function equals the generic \lean{pool}, read at the corresponding indices.
These are the steps that check the kernel's content: the three mask branches,
the leading-axis reductions, the comparison \mbox{\lean{total > 0}}, and the guarded
division. A kernel that zeroed only the features would fail here, because its
normalizer would include $e^{s_i}$ for masked rows.

\paragraph{Certified invariances.}
The final theorems are stated about the imported program:
\begin{itemize}
\item \lean{certified_permutation}: for every $e:\mathrm{Fin}\,3\simeq\mathrm{Fin}\,3$
and all real $x\in\R^{3\times2}$, $s\in\R^3$, $m\in\{\mathrm{t},\mathrm{f}\}^3$,
evaluating the $[3,2]$ program on the row-permuted inputs gives the same
result as on the original inputs.
\item \lean{certified_padding}: for all such $x,s,m$ and arbitrary
$p\in\R^{2\times2}$, $t\in\R^2$, evaluating the $[5,2]$ program on the
inputs with two masked rows appended gives the same result as the $[3,2]$
program on the original inputs.
\end{itemize}
The padding statement relates two separately traced programs. Its proof
rewrites both sides through their certificates and specification lemmas,
reindexes $\mathrm{Fin}\,5$ as $\mathrm{Fin}\,3\oplus\mathrm{Fin}\,2$ with
Theorem~\ref{thm:reindex}, and applies Theorem~\ref{thm:padding}. The generic
theorems hold for every size, but a Jaxpr has fixed shapes, so certificates
exist only for the shapes that were traced.

\section{Validation}\label{sec:validation}

The development was built with Lean~4.33.0, the mathlib release pinned by
\texttt{jax-lean} at revision \texttt{570aed6}, and JAX/jaxlib~0.8.0 on a CPU
device. A single script regenerates and compares the generated Lean files,
runs the Python tests, builds the Lean library, and audits seven theorems:
the five theorems of Section~\ref{sec:proofs} and the two certified
invariances. Each depends only on \lean{propext}, \lean{Classical.choice},
and \lean{Quot.sound}, the standard axioms used throughout mathlib. None
depends on \lean{sorryAx} or on a project-specific axiom.
Table~\ref{tab:size} gives the size of the development.

\begin{table}[t]
\centering
\small
\caption{Size of the development (lines, including comments).}
\label{tab:size}
\begin{tabular}{@{}lrl@{}}
\toprule
File & Lines & Content\\
\midrule
\lean{Basic.lean} & 79 & Definition~\ref{def:pool}, Theorems~\ref{thm:reindex}--\ref{thm:padding}, Props.~\ref{prop:congr}--\ref{prop:allmasked}, Cor.~\ref{cor:scorer}\\
\lean{Connection.lean} & 79 & specification lemmas, concrete and certified invariances\\
\lean{Generated3.lean} & 200 & emitted function, imported IR, certificates for $[3,2]$\\
\lean{Generated5.lean} & 200 & the same for $[5,2]$\\
\bottomrule
\end{tabular}
\end{table}

The theorems concern real arithmetic, so we also test execution separately.
Six JAX tests compare the kernel and a numerically stabilized variant
(Section~\ref{sec:limits}) against an independent NumPy formula. They cover
bag sizes $1$, $3$, and $17$, random permutations, $0$, $1$, and $11$ appended
rows filled with \texttt{NaN} features and $+\infty$ logits, all-masked and
empty bags, eager and JIT execution, and logits near $1000$. As a worked
example, take
\[
X=\begin{pmatrix}1&2\\3&4\\2&-1\end{pmatrix},\qquad
a=\begin{pmatrix}0.2\\-0.1\end{pmatrix},\qquad s=Xa=\begin{pmatrix}0\\0.2\\0.5\end{pmatrix}.
\]
With all three rows valid, the pooled output is $(2.057208,\,1.353158)$ in
both float64 and float32. Appending two rows of \texttt{NaN} or shuffling the
rows gives the same float32 output. Tests use tolerances because reordering a
floating-point sum can change its last bits~\cite{goldberg1991}. They are
evidence about execution and do not replace the proofs.

\section{Scope and limitations}\label{sec:limits}

\paragraph{Real arithmetic.}
The theorems are exact identities over $\R$. They do not model IEEE-754
rounding, overflow, \texttt{NaN}, or infinities, and they say nothing about
XLA's compiled code. In floating point the permuted and padded outputs agree
only up to rounding, which is why the tests use tolerances.

\paragraph{Trusted components.}
The Lean kernel and mathlib's definitions are trusted, as in any Lean
development. On the JAX side, JAX's tracer is trusted to produce the Jaxpr that
is executed, and the \texttt{jax-lean} importer is trusted to encode that
Jaxpr faithfully as Lean syntax with real-number semantics, with dtype
conversion erased~\cite{jaxlean}. The readable generated functions are not
trusted: they are linked to the imported program by checked certificates.

\paragraph{Fixed shapes.}
The generic theorems quantify over every finite index type and feature
dimension, but only the $[3,2]$ and $[5,2]$ traces are certified. Another
shape requires tracing and certifying it again, which the generator script
does mechanically.

\paragraph{The stabilized kernel is not certified.}
In practice the exponentials are shifted by the maximum valid logit to avoid
overflow. The repository's \texttt{stable\_masked\_pool} does this and is
covered by the numerical tests. Its own Jaxpr is not certified in this
version: the masked maximum uses $-\infty$ as an initial value and identity,
which falls outside the finite-real fragment that \texttt{jax-lean} models.
Mathematically the shift cancels between numerator and normalizer, so it does
not change Definition~\ref{def:pool} for finite logits. A certified version
would need a finite formulation of the masked maximum or an extended-real
semantics.

\paragraph{Architecture scope.}
Corollary~\ref{cor:scorer} covers scorers applied to each instance
separately. Encoders that mix instances before pooling, such as a
transformer over instances, positional encodings, or clustering losses, need
their own equivariance proofs. The results are structural. They make no
claim about accuracy, calibration, or clinical validity.

\section{Related work}

Selsam et al.\ wrote a learning system and its correctness theorem together
in Lean and verified that its sampled gradients are unbiased~\cite{selsam2017}.
Our goal is narrower and works after the fact: we prove one structural
property of an existing JAX kernel and connect it to the traced program
through \texttt{jax-lean}'s translation certificates~\cite{jaxlean}. That
library's examples include softmax normalization, masking, and transformer
components. The present development follows its transformer proofs and adds a
mask-aware pooling specification, padding across separately traced shapes,
and the instance-local scorer corollary.

Permutation-invariant set functions are a standard design principle for
learning on sets~\cite{zaheer2017,lee2019}, and attention MIL pooling is
introduced as one~\cite{ilse2018}. Those works establish invariance of the
mathematical operator on unpadded sets. We address the padded, masked
implementation and give the empty-bag convention explicitly. Verification of
trained neural networks usually concerns input--output properties of
particular weights, such as robustness. Our results hold for all weights and
all scorers and concern the aggregation layer only.

\section{Conclusion}

Masked attention pooling is padding- and permutation-invariant if masked rows
contribute zero mass to the normalizer as well as the numerator, and the
normalizer is guarded for empty bags. We stated this precisely, proved it in
Lean for every finite bag and feature dimension, and connected the proofs to
the Jaxpr traced from a JAX kernel for two concrete shapes. The proofs are
short, and the main value is the exact specification: it rules out the
feature-only masking error and fixes behaviour on empty bags and nonfinite
padding. Natural extensions are a certified max-shifted kernel and proofs for
encoders placed before pooling.

\section*{Code availability}

Source, generated files, tests, and the validation log are at
\url{https://github.com/y2x0/masked-attention-lean}, commit \texttt{97978b6}.
The \texttt{jax-lean} dependency is pinned as a submodule at revision\\
{\small\texttt{570aed69049f8e2931df9f55175313a340bcb456}}.\\
The full check runs with
\texttt{./scripts/check.sh}.

\section*{Acknowledgments}

Claude (Anthropic, Claude Opus 5.5) helped write and revise this manuscript
from the project's source, documentation, and validation record. The authors
reviewed the text and take full responsibility for its content.

\end{document}